\documentclass[11pt]{article}

\usepackage[english]{babel}

\usepackage[letterpaper,top=2cm,bottom=2cm,left=3cm,right=3cm,marginparwidth=1.75cm]{geometry}

\usepackage{amsfonts,amssymb,amsmath,amscd}
\usepackage{amsmath}
\usepackage{floatrow}
\usepackage{graphicx,wrapfig,lipsum}
\usepackage{amssymb}
\usepackage{graphicx}
\usepackage{subcaption}
\usepackage{float}
\usepackage{mathrsfs}
\usepackage{booktabs,tabularx}
\usepackage{latexsym}
\usepackage{booktabs,siunitx}
\usepackage{enumerate}
\usepackage{booktabs}
\usepackage{url}
\usepackage{listings}
\usepackage{color}
\usepackage{hyperref}
\usepackage{ulem}
\usepackage{array, makecell}
\usepackage{sidecap}
\usepackage{amsfonts}
\usepackage{verbatim}
\usepackage{amsthm}
\usepackage{dsfont}
\usepackage{url}
\usepackage{tocloft}
\usepackage{float} 
\usepackage{ae}
\usepackage{xcolor}
\usepackage{amsfonts}
\usepackage{amsmath}
\usepackage{bm}
\usepackage{mathtools}
\usepackage{dsfont}
\usepackage{graphicx}
\usepackage{soul}
\usepackage{titlesec}
\usepackage[utf8]{inputenc}
\usepackage{amsthm}
  
\usepackage{sectsty}
\usepackage{enumitem}
\usepackage{titletoc}
\usepackage{xr}
\newtheorem{theorem}{Theorem}[section]

\newtheorem{corollary}[theorem]{Corollary}
\numberwithin{equation}{section}

\title{On the Reliability of Generative Augmentation:\\ A Wasserstein-Based Theoretical and Empirical Study}
\author{
Chathurika S Abeykoon\\
abeykoonc@rhodes.edu\\
  \and
Mathias Nthiani Muia\\
mnmuia@southalabama.edu\\
\and
  Mallory Goldstein\\
golmd-27@rhodes.edu \\
   }
\date{}
\begin{document}
\maketitle
\begin{abstract}

Generative data augmentation is widely used to mitigate class imbalance, yet its theoretical effect on downstream generalization remains poorly understood. In this work, we develop a statistical framework for conditional generative augmentation and analyze its impact on classification risk. We formalize augmentation as a distribution-mixing process and show that the resulting risk distortion is controlled by both the augmentation strength and the class-conditional Wasserstein discrepancy between real and generated distributions. We further derive a capacity-dependent generalization bound based on Rademacher complexity, revealing an explicit trade-off between hypothesis complexity, augmentation intensity, and generative fidelity.

Empirically, we evaluate the framework on binary and multiclass imbalanced classification tasks using Conditional GAN  and Conditional WGAN-GP augmentation. Across datasets, CWGAN-GP consistently achieves lower Wasserstein discrepancies than CGAN, indicating improved distributional fidelity. However, improved fidelity does not necessarily translate into superior classification performance, with classical oversampling methods often remaining competitive. These findings support the central theoretical prediction that augmentation reliability is governed by distributional approximation error rather than predictive performance alone.

Overall, this work establishes generative augmentation as a distributional perturbation process whose reliability can be quantified through Wasserstein-based measures and supported by finite-sample generalization guarantees. The proposed framework provides a principled foundation for evaluating synthetic data quality beyond classification accuracy alone.

\end{abstract}

\textbf{Keywords:}
Generative Data Augmentation,
Imbalanced Classification,
Conditional GANs,
WGAN-GP,
Wasserstein Distance,
Optimal Transport,
Distributional Fidelity,
Generalization Bounds,
Class-Conditional Distributions,
Synthetic Data Reliability.

%-----------------------------------------------------------------------------------------------------------------
\section{Introduction}
%-------------------------------------------------------------------------------------------------------------------
%-------------------------------------------------------------------------------------------------------------------

Accurate classification of structured data is a central problem in machine learning. In practice, many classification tasks are characterized by limited labeled data and imbalanced class distributions, where certain classes are substantially underrepresented. Such imbalance distorts the empirical distribution used for learning, often leading to biased decision boundaries, reduced minority-class recall, and degraded generalization performance.

From a statistical learning perspective, class imbalance alters the empirical distribution used for risk minimization as discussed by He et al. in \cite{HeGarcia2009}. Let $(X,Y) \sim p_{\text{data}}(x,y)$ denote the joint distribution of features and class labels . In finite samples, empirical risk minimization approximates the population risk under the empirical distribution $\hat p(x,y)$. When $\hat p(y)$ is highly skewed, the learned classifier tends to prioritize majority classes, implicitly minimizing risk under a distorted sampling distribution. This imbalance can lead to biased decision boundaries and poor minority-class recall, even when overall accuracy appears high. Conventional mitigation strategies, including class reweighting, undersampling, and oversampling, partially address this issue but often fail to capture the underlying class-conditional feature distributions in a principled manner.

Synthetic data augmentation provides a promising alternative. Rather than merely replicating minority samples or adjusting loss weights, generative models aim to approximate the conditional distribution $p_{\text{data}}(x \mid y)$ and sample new feature vectors from the learned distribution. As a result, synthetic augmentation can enrich the effective support of minority classes and encourage classifiers to learn smoother and more representative decision boundaries. However, applying generative augmentation to structured feature spaces introduces several practical challenges. In contrast to image-based settings, where convolutional inductive biases help preserve spatial coherence and support realistic sample generation, tabular feature vectors typically lack inherent structure. As a result, statistical relationships among variables must be learned directly from often limited data, increasing the difficulty of accurately modeling the underlying distribution. Under these conditions, generative models are more susceptible to training instability, distributional artifacts, noise amplification, and mode collapse, all of which can negatively impact downstream classification performance rather than improve it.\cite{Lucic2018}

Generative Adversarial Networks (GAN) offers a flexible framework for learning complex, high-dimensional data distributions through adversarial training \cite{Goodfellow2014}. A generator network maps latent noise variables to synthetic samples, while a discriminator (or critic) attempts to distinguish real from generated data. Conditional GAN (CGAN) extends this framework by conditioning both generator and discriminator on class labels, enabling targeted generation of class-specific samples. In settings with class imbalance, conditional generation allows focused augmentation of underrepresented varieties. Nevertheless, classical GAN training based on Jensen--Shannon divergence is known to suffer from vanishing gradients and unstable dynamics when real and generated distributions have limited support overlap.

To address these challenges, Wasserstein GANs (WGANs) reformulate the adversarial objective using the Wasserstein-1 distance, a metric grounded in optimal transport theory that provides informative gradients even when the supports of the real and generated distributions do not overlap \cite{Arjovsky2017}. Within this framework, the discriminator is replaced by a Lipschitz-constrained critic trained to approximate the Wasserstein distance between real and synthetic distributions, resulting in a smoother optimization landscape and more stable convergence behavior relative to divergence-based objectives \cite{Arjovsky2017,Gulrajani2017}. The gradient penalty formulation (WGAN-GP) further enforces the Lipschitz constraint without restricting model capacity through weight clipping, improving training stability, sample diversity, and robustness to mode collapse across a variety of applications \cite{Gulrajani2017,Lucic2018}. These theoretical and practical advantages make WGAN-GP particularly well suited for modeling continuous feature distributions and for downstream learning tasks that are sensitive to distributional mismatch.

While generative augmentation has demonstrated empirical success in addressing class imbalance, its theoretical implications for classification performance remain insufficiently understood. Existing studies \cite{DouzasBacao2018,Lucic2018} largely evaluate augmentation strategies through predictive accuracy, with limited attention to how discrepancies between real and generated distributions affect downstream learning. Consequently, many approaches lack formal guarantees regarding the reliability of synthetic data augmentation.

In this work, we develop a theoretical and empirical framework for studying the reliability of conditional generative augmentation in imbalanced classification. We employ CGAN and Conditional WGAN-GP (CWGAN-GP) architectures to learn class-conditional feature distributions and generate synthetic samples for data augmentation. Building on concepts from optimal transport, we derive a Wasserstein-based risk bound that explicitly quantifies how discrepancies between the learned and true data distributions influence classification risk. The framework is evaluated on both binary and multiclass classification tasks exhibiting natural class imbalance.

Experimental results demonstrate that CWGAN-GP consistently achieves lower Wasserstein discrepancies than CGAN, indicating improved distributional fidelity of the generated samples. However, improved fidelity does not necessarily translate into superior classification performance relative to classical oversampling methods. These findings highlight an important distinction between synthetic data fidelity and predictive utility and motivate the need for principled reliability measures beyond classification accuracy alone. Overall, the proposed framework provides both theoretical guarantees and practical tools for evaluating the reliability of generative augmentation in imbalanced learning settings.

\section{Related Work}

\subsection{Class Imbalance and Data Augmentation}

Class imbalance is a longstanding challenge in supervised learning because empirical risk minimization tends to favor majority classes, often resulting in poor minority-class performance \cite{HeGarcia2009}. Traditional approaches include class reweighting, undersampling, and oversampling techniques that modify the effective training distribution. Among these methods, the Synthetic Minority Oversampling Technique (SMOTE) generates additional minority-class samples through interpolation between neighboring observations and remains one of the most widely used approaches for imbalanced classification \cite{Chawla2002}. Although such methods can improve predictive performance, they primarily alter sample frequencies and do not explicitly model the underlying class-conditional distributions.

Recent advances in generative modeling have motivated the use of synthetic data augmentation as an alternative strategy for addressing class imbalance. Rather than replicating or interpolating existing observations, generative models attempt to learn the data-generating process and sample new observations from the learned distribution. This perspective has led to growing interest in generative augmentation across a variety of application domains.

\subsection{Generative Models and Wasserstein-Based Learning}

Generative Adversarial Networks (GANs) introduced a powerful framework for learning complex data distributions through adversarial training \cite{Goodfellow2014}. Conditional GANs (CGANs) extend this framework by incorporating class labels into both the generator and discriminator, enabling class-specific sample generation and targeted augmentation of minority classes \cite{Mirza2014}. Numerous studies have reported improved classification performance through GAN-based augmentation in both image and structured-data settings \cite{DouzasBacao2018}.

Despite their success, classical GANs are known to suffer from training instability, mode collapse, and sensitivity to divergence-based objectives \cite{Lucic2018}. Wasserstein GANs address these issues by replacing divergence-based training objectives with the Wasserstein-1 distance, a metric grounded in optimal transport theory \cite{Arjovsky2017}. The gradient-penalty formulation (WGAN-GP) further improves stability by enforcing the Lipschitz constraint through gradient regularization rather than weight clipping \cite{Gulrajani2017}. These developments have made Wasserstein-based generative models particularly attractive for applications where distributional fidelity is important.

\subsection{Generative Augmentation and Distributional Reliability}

Most existing studies evaluate generative augmentation primarily through downstream predictive performance, such as accuracy, F1 score, or recall \cite{DouzasBacao2018,FridAdar2018}. While these metrics provide useful information regarding task-specific performance, they offer limited insight into how closely the generated samples approximate the underlying data-generating distribution. As a result, the relationship between generative fidelity and classification performance remains poorly understood.

Several recent works have advocated the use of distributional metrics for evaluating synthetic data quality, including measures based on optimal transport and Wasserstein distances. However, formal connections between distributional approximation error and downstream classification risk remain relatively underdeveloped. In particular, few studies provide theoretical guarantees describing how discrepancies between real and generated distributions influence the reliability of synthetic augmentation.

The present work addresses this gap by combining conditional generative augmentation with a Wasserstein-based theoretical framework. Rather than evaluating synthetic data solely through predictive performance, we analyze augmentation as a distribution-mixing process and derive explicit risk bounds linking class-conditional Wasserstein approximation error to classification risk. This perspective provides a principled framework for assessing the reliability of synthetic data augmentation in imbalanced learning settings.

%-----------------------------------------------------------------------------------------------------------------
\subsection{Our Contribution}
%-----------------------------------------------------------------------------------------------------------------

The main contributions of this work are summarized as follows:

\begin{itemize}

\item \textbf{A Wasserstein-Based Framework for Augmentation Reliability.}
We develop a theoretical framework for analyzing the reliability of generative data augmentation in imbalanced classification. The framework explicitly links class-conditional distributional discrepancies between real and synthetic data to downstream classification risk.

\item \textbf{Risk Bounds for Conditional Generative Augmentation.}
We derive a Wasserstein-based risk bound showing that augmentation-induced risk distortion is controlled by both the augmentation strength and the class-conditional Wasserstein discrepancy between the true and generated distributions. We further establish capacity-dependent generalization guarantees through a Rademacher complexity analysis.

\item \textbf{Distributional Fidelity Evaluation.}
We introduce class-wise and weighted Wasserstein fidelity measures that provide interpretable and theoretically motivated criteria for assessing the quality and reliability of synthetic data generated by conditional generative models.

\item \textbf{Empirical Validation on Imbalanced Classification Tasks.}
We conduct extensive experiments on binary and multiclass datasets exhibiting natural class imbalance. The empirical results demonstrate that CWGAN-GP consistently achieves lower class-conditional Wasserstein discrepancies than CGAN, supporting the theoretical relationship between generative fidelity and augmentation reliability while highlighting the distinction between distributional fidelity and downstream predictive performance.

\end{itemize}

%-----------------------------------------------------------------------------------------------------------------
\section{Theoretical Framework for Generative Augmentation Reliability}
%-----------------------------------------------------------------------------------------------------------------

This section develops a statistical learning framework for analyzing conditional generative augmentation in structured feature space. Our goal is to formally characterize how the fidelity of synthetic data generated by a conditional adversarial model influences downstream classification risk and augmentation reliability. Rather than treating augmentation as a heuristic data-expansion procedure, we interpret it as controlled distributional mixing and quantify its impact using tools from optimal transport theory which is an alternative way of measuring the distance between probability distributions.

%-----------------------------------------------------------------------------------------------------------------
\subsection{Learning Under Imbalanced Sampling}
%-----------------------------------------------------------------------------------------------------------------

Let $(X,Y)\sim P$ denote the true joint distribution over the feature space $\mathcal{X}\subset\mathbb{R}^d$ and label set $\mathcal{Y}=\{1,\ldots,K\}$. For a classifier $f:\mathcal{X}\rightarrow\mathcal{Y}$ and loss function $\ell$, the population risk is defined as
\begin{equation}
R_P(f)
=
\mathbb{E}_{(x,y)\sim P}
\left[\ell(f(x),y)\right].
\end{equation}
Given a training sample
$\mathcal{D}=\{(x_i,y_i)\}_{i=1}^{n}$,
learning proceeds by minimizing the empirical risk
\begin{equation}
\widehat{R}_n(f)
=
\frac{1}{n}
\sum_{i=1}^{n}
\ell(f(x_i),y_i).
\end{equation}

Under class imbalance, empirical risk minimization tends to emphasize majority classes while underrepresenting minority-class risk. Generative augmentation seeks to alleviate this issue by learning class-conditional distributions and generating additional minority-class samples.

%-----------------------------------------------------------------------------------------------------------------
\subsection{Conditional Generative Modeling}
%-----------------------------------------------------------------------------------------------------------------

For each class $y\in\mathcal{Y}$, let
\begin{equation}
P_y = P(\cdot \mid Y=y)
\end{equation}
denote the true class-conditional distribution. A conditional generator is a measurable mapping
\begin{equation}
G_{\theta}:\mathcal{Z}\times\mathcal{Y}\rightarrow\mathcal{X},
\end{equation}
where $z\sim p_Z$ is a latent variable. The generator induces the class-conditional distribution
\begin{equation}
P_{G,y}
=
\bigl(G_{\theta}(\cdot,y)\bigr)_{\#}p_Z,
\end{equation}
where $(\cdot)_{\#}$ denotes the pushforward measure. Training aims to approximate the true class-conditional distributions,
\begin{equation}
P_{G,y}
\approx
P_y,
\qquad
y\in\mathcal{Y}.
\end{equation}

%-----------------------------------------------------------------------------------------------------------------
\subsection{Wasserstein Approximation Error}
%-----------------------------------------------------------------------------------------------------------------

To quantify the discrepancy between real and generated distributions, we employ the Wasserstein--1 distance
\begin{equation}
W_1(\mu,\nu)
=
\inf_{\gamma\in\Pi(\mu,\nu)}
\mathbb{E}_{(x,\tilde{x})\sim\gamma}
\left[\|x-\tilde{x}\|\right],
\end{equation}
where $\Pi(\mu,\nu)$ denotes the set of all couplings with marginals $\mu$ and $\nu$. Equivalently, by Kantorovich--Rubinstein duality,
\begin{equation}
W_1(\mu,\nu)
=
\sup_{\|g\|_{\mathrm{Lip}}\le 1}
\left|
\mathbb{E}_{\mu}[g(x)]
-
\mathbb{E}_{\nu}[g(x)]
\right|.
\end{equation}
For each class $y$, we define the class-conditional approximation error as
\begin{equation}
\varepsilon_y
=
W_1(P_y,P_{G,y}).
\end{equation}

Smaller values of $\varepsilon_y$ indicate closer agreement between the real and generated class-conditional distributions and therefore higher generative fidelity.

%-----------------------------------------------------------------------------------------------------------------
\subsection{Synthetic Augmentation as Distribution Mixing}
%-----------------------------------------------------------------------------------------------------------------

After training, synthetic samples are incorporated into the data through the mixture distribution
\begin{equation}
\label{augmented distribution}
P_{\alpha}
=
(1-\alpha)P
+
\alpha P_G,
\end{equation}
where $\alpha\in[0,1]$ denotes the augmentation strength and
\begin{equation}
P_G(x,y)
=
\pi_y P_{G,y}(x),
\qquad
\pi_y=P(Y=y).
\end{equation}
The corresponding augmented risk is
\begin{equation}
\label{augmented risk}
R_{P_{\alpha}}(f)
=
\mathbb{E}_{(x,y)\sim P_{\alpha}}
\left[\ell(f(x),y)\right].
\end{equation}

The following theorem establishes how the class-conditional approximation errors $\{\varepsilon_y\}$ influence the difference between the original and augmented risks.

%%-----------------------------------------------------------------------------------------------------------------
\subsection{Risk Control Under Conditional Wasserstein Approximation}
%%-----------------------------------------------------------------------------------------------------------------
The following theorem establishes a Wasserstein-based bound on the risk distortion induced by synthetic augmentation. It shows that the difference between the original and augmented risks scales linearly with the augmentation strength $\alpha$ and the class-conditional approximation errors $\{\varepsilon_y\}_{y=1}^K$. As a result, higher-fidelity generators yield tighter control of augmentation-induced risk distortion.

%%-------------------------------------------------------------------THEOREM----------------------------------------------
\begin{theorem}[Risk Stability Under Conditional Generative Augmentation] Assume:
\label{Theorem1}
\begin{enumerate}
    \item The loss $\ell(f(x),y)$ is $L_\ell$-Lipschitz with respect to $x$.
    
    \item The hypothesis class $\mathcal F$ is measurable and induces a uniformly bounded loss.
    
    \item For each class $y\in\mathcal Y$,
    \[
    W_1(P_y,P_{G,y})
    \le
    \varepsilon_y .
    \]
\end{enumerate}

Then for any classifier $f$,
\begin{equation}
|R_{P_\alpha}(f) - R_P(f)|
\le
\alpha L_\ell \sum_{y \in \mathcal{Y}} \pi_y \varepsilon_y.
\end{equation}

If $\varepsilon_y \le \varepsilon$ uniformly, then
\begin{equation}
|R_{P_\alpha}(f) - R_P(f)|
\le
\alpha L_\ell \varepsilon.
\end{equation}
\end{theorem}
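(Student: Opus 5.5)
The proof follows directly from the mixture structure of $P_\alpha$ in \eqref{augmented distribution} and the Kantorovich--Rubinstein duality. We first use linearity of expectation in the measure to write
\[
R_{P_\alpha}(f)-R_P(f)=(1-\alpha)R_P(f)+\alpha R_{P_G}(f)-R_P(f)=\alpha\bigl(R_{P_G}(f)-R_P(f)\bigr).
\]
The bounded-loss assumption (item 2) ensures that all these expectations are finite and that the splitting is legitimate. This reduces the problem to bounding $|R_{P_G}(f)-R_P(f)|$ by $L_\ell\sum_y\pi_y\varepsilon_y$.

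Next we condition on the label. Since $P=\sum_y \pi_y P_y\otimes\delta_y$ and $P_G=\sum_y\pi_y P_{G,y}\otimes\delta_y$ share the same label marginal $\pi$,
\[
R_{P_G}(f)-R_P(f)=\sum_{y\in\mathcal Y}\pi_y\Bigl(\mathbb{E}_{x\sim P_{G,y}}[\ell(f(x),y)]-\mathbb{E}_{x\sim P_y}[\ell(f(x),y)]\Bigr).
\]
For each fixed $y$ define $g_y(x)=\ell(f(x),y)/L_\ell$ (the case $L_\ell=0$ is trivial). By item 1, $g_y$ is $1$-Lipschitz in $x$. The dual formula for $W_1$ then gives
\[
\bigl|\mathbb{E}_{P_{G,y}}[\ell(f(x),y)]-\mathbb{E}_{P_y}[\ell(f(x),y)]\bigr|\le L_\ell W_1(P_y,P_{G,y})\le L_\ell\varepsilon_y,
\]
using item 3. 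Applying the triangle inequality over the sum in $y$ and multiplying by $\alpha$ yields the first claimed bound. The uniform case follows from $\sum_y\pi_y=1$.

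The only step needing care is the interpretation of item 1. The Lipschitz property must hold for the composed map $x\mapsto\ell(f(x),y)$ for the fixed classifier $f$ and each label $y$, since $f$ itself is label-valued and typically discontinuous. I would state explicitly that this is how the hypothesis is read. Alternatively, if one prefers the primal route, take an optimal coupling $\gamma_y\in\Pi(P_y,P_{G,y})$ and bound $\mathbb{E}_{\gamma_y}|\ell(f(x),y)-\ell(f(\tilde x),y)|\le L_\ell\mathbb{E}_{\gamma_y}\|x-\tilde x\|$. Existence of optimal couplings, or an $\epsilon$-approximate one followed by a limit, requires finite first moments, which I would assume for $P_y$ and $P_{G,y}$ so that $W_1$ is finite.
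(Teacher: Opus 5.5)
Your proof is correct and follows the paper's argument step for step. Both use the mixture identity $R_{P_\alpha}(f)-R_P(f)=\alpha(R_{P_G}(f)-R_P(f))$, condition on the shared label marginal $\pi$, rescale $\ell(f(\cdot),y)$ by $1/L_\ell$, apply Kantorovich--Rubinstein duality class by class, and finish with the triangle inequality and $\sum_y\pi_y=1$. Your reading of item 1 as Lipschitzness of the composed map $x\mapsto\ell(f(x),y)$ is exactly how the paper uses it. The finite-first-moment caveat in your primal alternative is unnecessary: bounding $|\mathbb{E}_{P_{G,y}}g-\mathbb{E}_{P_y}g|$ along an arbitrary coupling and taking the infimum already gives the inequality.
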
\ \\
These assumptions are standard in optimal transport and statistical learning theory. Assumption 1 links Wasserstein discrepancies to differences in expected loss, Assumption 2 ensures the existence of the relevant expectations, and Assumption 3 quantifies the class-conditional approximation error of the generator.

\begin{proof}[Proof]

Let $P$ denote the true joint distribution and $P_G$ the synthetic joint distribution defined as in Equation \ref{augmented distribution}. By linearity of expectation, we decompose the augmented risk (\ref{augmented risk}) under mixture distributions.
\begin{align*}
R_{P_\alpha}(f)
&=
\mathbb{E}_{(x,y)\sim P_\alpha}
[\ell(f(x), y)] \\
&=
(1-\alpha)\mathbb{E}_{P}[\ell(f(x),y)]
+
\alpha \mathbb{E}_{P_G}[\ell(f(x),y)].
\end{align*}
Hence,
\begin{align*}
R_{P_\alpha}(f) - R_P(f)
&=
(1-\alpha)R_P(f)
+
\alpha R_{P_G}(f)
-
R_P(f) \\
&=
\alpha\big(R_{P_G}(f) - R_P(f)\big).
\end{align*}
Taking absolute values,
\begin{equation*}
|R_{P_\alpha}(f) - R_P(f)|
=
\alpha |R_{P_G}(f) - R_P(f)|.
\end{equation*}
Thus it suffices to bound $|R_{P_G}(f) - R_P(f)|$. Next by using the law of total expectation, we do the conditional decomposition as
\begin{align*}
R_{P_G}(f)
&=
\sum_{y \in \mathcal{Y}}
\pi_y
\mathbb{E}_{x \sim P_{G,y}}
[\ell(f(x), y)],
\\
R_P(f)
&=
\sum_{y \in \mathcal{Y}}
\pi_y
\mathbb{E}_{x \sim P_y}
[\ell(f(x), y)].
\end{align*}
Therefore,
\begin{align}
R_{P_G}(f) - R_P(f)
=
\sum_{y \in \mathcal{Y}}
\pi_y
\left(
\mathbb{E}_{P_{G,y}}[\ell(f(x), y)]
-
\mathbb{E}_{P_y}[\ell(f(x), y)]
\right).
\end{align}
Taking absolute values and applying the triangle inequality,
\begin{align}
\label{diffeq}
|R_{P_G}(f) - R_P(f)|
\le
\sum_{y \in \mathcal{Y}}
\pi_y
\left|
\mathbb{E}_{P_{G,y}}[\ell(f(x), y)]
-
\mathbb{E}_{P_y}[\ell(f(x), y)]
\right|.
\end{align}
By assumption, the loss $\ell(f(x), y)$ is $L_\ell$-Lipschitz in $x$.  Define $g_y(x) := \ell(f(x), y)$. Then
\[
\|g_y\|_{\text{Lip}} \le L_\ell.
\]

Let $\tilde g_y(x) := \frac{1}{L_\ell} g_y(x)$. Then $\|\tilde g_y\|_{\text{Lip}} \le 1$. By Kantorovich--Rubinstein duality,
\[
\left|
\mathbb{E}_{P_{G,y}}[\tilde g_y(x)]
-
\mathbb{E}_{P_y}[\tilde g_y(x)]
\right|
\le
W_1(P_y, P_{G,y}).
\]
Multiplying both sides by $L_\ell$ gives
\begin{align}\label{mull}
\left|
\mathbb{E}_{P_{G,y}}[\ell(f(x), y)]
-
\mathbb{E}_{P_y}[\ell(f(x), y)]
\right|
\le
L_\ell W_1(P_y, P_{G,y}).
\end{align}
Substituting \ref{mull} into \ref{diffeq},
\begin{align}
|R_{P_G}(f) - R_P(f)|
\le
L_\ell
\sum_{y \in \mathcal{Y}}
\pi_y
W_1(P_y, P_{G,y}).
\end{align}
By assumption, $ W_1(P_y, P_{G,y}) \le \varepsilon_y$.  Thus,
\begin{align}
|R_{P_G}(f) - R_P(f)|
\le
L_\ell
\sum_{y \in \mathcal{Y}}
\pi_y \varepsilon_y.
\end{align}
Finally,
\begin{align}
|R_{P_\alpha}(f) - R_P(f)|
\le
\alpha L_\ell
\sum_{y \in \mathcal{Y}}
\pi_y \varepsilon_y.
\end{align}
If $\varepsilon_y \le \varepsilon$ uniformly,
\[
|R_{P_\alpha}(f) - R_P(f)|
\le
\alpha L_\ell \varepsilon.
\]
This completes the proof.
\end{proof}

In practice, the class-conditional Wasserstein discrepancies provide a quantitative measure of generative fidelity. Theorem \ref{Theorem1} therefore establishes a direct connection between distributional fidelity and augmentation reliability: smaller Wasserstein discrepancies imply tighter control of augmentation-induced risk distortion.

%%----------------------------------------------------------------------------------------------------------
\subsection{Asymptotic Consistency of Conditional Generative Augmentation}
%%-------------------------------------------------------------------THEOREM----------------------------------------------
While risk bounds under finite Wasserstein discrepancy quantify the immediate impact of generative augmentation, it is also important to understand the long-term behavior as the generator improves. Specifically, if the generator is able to produce samples that increasingly approximate the true class-conditional distributions, we expect the induced risk distortion to vanish as discussed in \cite{BenDavid2010}.

%%-------------------------------------------------------------------THEOREM----------------------------------------------
\begin{corollary}[Consistency Under Vanishing Transport Error]
\label{Theorem2}
Suppose $\{G_n\}$ is a sequence of generators satisfying
\begin{equation}
\max_{y \in \mathcal{Y}}
W_1(P_y, P_{G_n,y})
\to 0.
\end{equation}
If $\alpha_n \to \alpha \in [0,1]$, then for any fixed classifier $f$,
\begin{equation}
R_{P_{\alpha_n}}(f) \to R_P(f).
\end{equation}
\end{corollary}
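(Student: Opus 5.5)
The plan is to apply Theorem~\ref{Theorem1} to each generator $G_n$ separately and then pass to the limit. The hypotheses of Theorem~\ref{Theorem1} are taken to be in force: the loss is $L_\ell$-Lipschitz in $x$ and uniformly bounded. Here $P_{\alpha_n}=(1-\alpha_n)P+\alpha_n P_{G_n}$, with $P_{G_n}(x,y)=\pi_y P_{G_n,y}(x)$.

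For each $n$, set $\varepsilon_n:=\max_{y\in\mathcal Y}W_1(P_y,P_{G_n,y})$. Assumption~3 of Theorem~\ref{Theorem1} then holds with the uniform bound $\varepsilon_y\le\varepsilon_n$. The uniform form of the theorem, with $\alpha$ replaced by $\alpha_n$, gives
\[
|R_{P_{\alpha_n}}(f)-R_P(f)|\le \alpha_n L_\ell\,\varepsilon_n .
\]
The right-hand side tends to zero. Indeed, $\alpha_n\in[0,1]$ is bounded (it converges to $\alpha$), $L_\ell$ is a fixed constant, and $\varepsilon_n\to0$ by hypothesis. Hence $R_{P_{\alpha_n}}(f)\to R_P(f)$.

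There is no real obstacle; the limit $\alpha$ plays no role beyond ensuring boundedness. The one point to check is that the Lipschitz constant $L_\ell$ does not depend on $n$. This holds because $f$ is fixed and the loss does not involve the generator. Since $\mathcal Y$ is finite, the maximum and the $\pi_y$-weighted sum are equivalent, so the weighted bound $\alpha_n L_\ell\sum_y\pi_y W_1(P_y,P_{G_n,y})$ could be used instead with the same conclusion.
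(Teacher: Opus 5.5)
Your proposal is correct and is essentially the paper's proof: you apply the uniform form of Theorem~\ref{Theorem1} to each $G_n$ with strength $\alpha_n$ and $\varepsilon_n=\max_{y}W_1(P_y,P_{G_n,y})$, obtain $|R_{P_{\alpha_n}}(f)-R_P(f)|\le\alpha_n L_\ell\varepsilon_n$, and conclude from $0\le\alpha_n\le 1$ and $\varepsilon_n\to 0$. Your explicit remarks that the hypotheses of Theorem~\ref{Theorem1} must hold for every $n$ with an $n$-independent $L_\ell$ make precise what the paper leaves implicit; in your closing aside, the weighted bound only needs the inequality $\sum_y\pi_y W_1(P_y,P_{G_n,y})\le\max_y W_1(P_y,P_{G_n,y})$, whereas full equivalence of the two would require $\min_y\pi_y>0$.
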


This result establishes that conditional generative augmentation is asymptotically unbiased: as the generator becomes perfectly accurate in the Wasserstein sense, the expected classification risk under the augmented distribution converges to that under the true distribution. In other words, improvements in generator quality translate directly into more reliable augmentation.

\begin{proof}[Proof ]

Let $\{G_n\}$ be a sequence of generators such that
$
\max_{y \in \mathcal{Y}}
W_1(P_y, P_{G_n,y})
\to 0.
$ From Theorem \ref{Theorem1},
\[
|R_{P_{\alpha_n}}(f) - R_P(f)|
\le
\alpha_n L_\ell
\max_{y}
W_1(P_y, P_{G_n,y}).
\]

Since $0 \le \alpha_n \le 1$, the sequence $\{\alpha_n\}$ is bounded.  Because $\max_{y}
W_1(P_y, P_{G_n,y}) \to 0,$ the product converges to zero. Hence,
\[
R_{P_{\alpha_n}}(f) \to R_P(f).
\]

This establishes asymptotic unbiasedness under vanishing Wasserstein approximation error.
\end{proof}

These findings are consistent with prior results in statistical learning theory and optimal transport, which establish that excess risk vanishes as the Wasserstein discrepancy between training and target distributions converges to zero, yielding asymptotic consistency of learning under distributional approximation \cite{Redko2017}.

%-------------------------------------------------------------------------------------------------------------
\subsection{Capacity-Dependent Generalization via Rademacher Complexity}
%-------------------------------------------------------------------------------------------------------------

The Wasserstein-based bounds developed above quantify the bias introduced by generative augmentation at the population level. In practice, however, classifiers are learned from finite samples and selected from a hypothesis class $\mathcal{F}$. Consequently, generalization performance depends not only on distributional approximation error but also on the capacity of the hypothesis class.

To capture this effect, we employ Rademacher complexity, a standard measure of function class capacity in statistical learning theory. Let $\sigma_1,\ldots,\sigma_n$ be independent Rademacher random variables taking values $\pm1$ with equal probability. For the loss-composed class

\[
\ell \circ \mathcal{F}
=
\left\{
(x,y)\mapsto \ell(f(x),y)
:\,
f\in\mathcal{F}
\right\},
\]

the empirical Rademacher complexity is defined as

\[
\widehat{\mathfrak{R}}_n(\ell\circ\mathcal{F})
=
\mathbb{E}_{\sigma}
\left[
\sup_{f\in\mathcal{F}}
\frac1n
\sum_{i=1}^{n}
\sigma_i
\ell(f(x_i),y_i)
\right].
\]

Smaller values of $\widehat{\mathfrak{R}}_n(\ell\circ\mathcal{F})$ indicate a lower-capacity hypothesis class and stronger generalization guarantees. When training on the augmented distribution \ref{augmented distribution}, the generalization error consists of two components:
\begin{enumerate}
\item \textbf{Estimation error}, controlled by the Rademacher complexity of $\mathcal{F}$.
\item \textbf{Augmentation bias}, controlled by the Wasserstein approximation errors $\{\varepsilon_y\}$.
\end{enumerate}

Next we combine these quantities to derive a finite-sample generalization bound that explicitly accounts for hypothesis class capacity, augmentation strength, and generative fidelity.
%%---------------------------------------------------------------------------------
\subsection{Capacity-Dependent Excess Risk Under Generative Augmentation}
%%-------------------------------------------------------------------------------------------------------------
While previous results quantified the bias induced by generative augmentation via class-conditional Wasserstein distances, and asymptotic consistency guaranteed vanishing bias for perfect generators, in practice we work with finite samples and a restricted hypothesis class. The generalization ability of the learned classifier therefore depends not only on the fidelity of the synthetic data but also on the capacity of the function class used for learning.

To formalize this, we analyze the empirical risk minimizer trained on a mixture of real and synthetic data. By combining Wasserstein-based bias control with Rademacher complexity bounds that capture hypothesis class capacity, we obtain a high-probability bound on the excess risk. This bound explicitly quantifies how the risk depends on:

\begin{enumerate}
\item The augmentation strength $\alpha$ and class-conditional approximation errors $\epsilon_y$ (bias due to the generator).
\item The complexity of the hypothesis class (variance due to finite samples).
\item The sample size $N$ and desired confidence $1-\delta$.
\end{enumerate}

The following theorem formalizes these trade-offs, showing that with high probability, the empirical minimizer achieves an excess risk that balances the fidelity of the generative augmentation with the expressive power of the classifier.
%%-------------------------------------------------------------------THEOREM----------------------------------------------
%%-------------------------------------------------------------------THEOREM----------------------------------------------
\begin{theorem}[Capacity-Dependent Generalization Bound]
\label{Theorem3}

Let $\varepsilon
=
\max_{y\in\mathcal Y}
W_1(P_y,P_{G,y})$ denote the maximum class-conditional Wasserstein approximation error. Under the assumptions of Theorem~\ref{Theorem1}, for any $\delta\in(0,1)$, with probability at least $1-\delta$,

\begin{equation}
R_P(\widehat f_\alpha)
\le
\inf_{f\in\mathcal F}R_P(f)
+
2\mathfrak R_n(\ell\circ\mathcal F)
+
3B\sqrt{\frac{\log(2/\delta)}{2n}}
+
\alpha L_\ell \varepsilon.
\end{equation}

\end{theorem}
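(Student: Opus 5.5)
The plan is to run the standard uniform-convergence argument for empirical risk minimization, but under $P_\alpha$ instead of $P$, and to let the generator bias enter \emph{once}. To do that, the two bias corrections (one for $\widehat f_\alpha$, one for the comparator) are merged into a single Wasserstein estimate of an excess-loss function. Concretely, $\widehat f_\alpha$ minimizes the empirical risk $\widehat R_{\alpha,n}$ over $n$ i.i.d.\ draws from $P_\alpha$ in \eqref{augmented distribution}, and $B$ is the uniform bound on the loss from Assumption~2 of Theorem~\ref{Theorem1}. Fix $\eta>0$ and choose $f^\star\in\mathcal F$ with $R_P(f^\star)\le\inf_{f\in\mathcal F}R_P(f)+\eta$; letting $\eta\downarrow 0$ at the end costs nothing.

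\textbf{Step 1 (exact decomposition).} I would write
\begin{align*}
R_P(\widehat f_\alpha)-R_P(f^\star)
&=\bigl[R_{P_\alpha}(\widehat f_\alpha)-R_{P_\alpha}(f^\star)\bigr]\\
&\quad+\bigl[(R_P-R_{P_\alpha})(\widehat f_\alpha)-(R_P-R_{P_\alpha})(f^\star)\bigr].
\end{align*}
By the mixture identity in the proof of Theorem~\ref{Theorem1}, $R_P-R_{P_\alpha}=\alpha(R_P-R_{P_G})$. Set $h(x,y):=\ell(\widehat f_\alpha(x),y)-\ell(f^\star(x),y)$. Then the second bracket equals
\[
\alpha\sum_{y\in\mathcal Y}\pi_y\bigl(\mathbb E_{P_y}[h(\cdot,y)]-\mathbb E_{P_{G,y}}[h(\cdot,y)]\bigr).
\]
This is one difference of expectations of a single function. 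The Kantorovich--Rubinstein step of Theorem~\ref{Theorem1} bounds it by $\alpha\,\|h\|_{\mathrm{Lip}}\sum_y\pi_y\varepsilon_y\le\alpha\|h\|_{\mathrm{Lip}}\,\varepsilon$.

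\textbf{Step 2 (estimation part).} I would bound the first bracket by the usual three-term chain
\begin{align*}
R_{P_\alpha}(\widehat f_\alpha)-R_{P_\alpha}(f^\star)
&\le\bigl[R_{P_\alpha}(\widehat f_\alpha)-\widehat R_{\alpha,n}(\widehat f_\alpha)\bigr]\\
&\quad+\bigl[\widehat R_{\alpha,n}(f^\star)-R_{P_\alpha}(f^\star)\bigr],
\end{align*}
dropping $\widehat R_{\alpha,n}(\widehat f_\alpha)-\widehat R_{\alpha,n}(f^\star)\le 0$.
\begin{itemize}
\item The first term is at most $\sup_{f\in\mathcal F}(R_{P_\alpha}-\widehat R_{\alpha,n})(f)$. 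Symmetrization plus McDiarmid's inequality gives, with probability at least $1-\delta/2$,
\[
\sup_{f\in\mathcal F}(R_{P_\alpha}-\widehat R_{\alpha,n})(f)\le 2\mathfrak R_n(\ell\circ\mathcal F)+2B\sqrt{\log(2/\delta)/(2n)}.
\]
The $2B$ accounts for replacing the expected complexity by its empirical version. If $\mathfrak R_n$ is meant as the expected complexity, $B$ suffices here.
\item The second term involves the fixed function $f^\star$. Hoeffding's inequality bounds it by $B\sqrt{\log(2/\delta)/(2n)}$ with probability at least $1-\delta/2$.
\end{itemize}
A union bound then yields $2\mathfrak R_n(\ell\circ\mathcal F)+3B\sqrt{\log(2/\delta)/(2n)}$ with probability at least $1-\delta$, which matches the stated constants.

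\textbf{Main obstacle.} Combining the two steps gives the theorem exactly, provided $\|h\|_{\mathrm{Lip}}\le L_\ell$. This is where the real work lies. Bounding $\widehat f_\alpha$ and $f^\star$ separately through Theorem~\ref{Theorem1} gives $2\alpha L_\ell\varepsilon$, and the triangle inequality on $h$ likewise only gives $2L_\ell$. So the argument must avoid splitting $h$. I would read Assumption~1 of Theorem~\ref{Theorem1} as the requirement that the loss is $L_\ell$-Lipschitz in $x$ uniformly over the excess-loss class $\{\ell(f(\cdot),\cdot)-\ell(g(\cdot),\cdot):f,g\in\mathcal F\}$. This is the class whose expectations actually enter the excess risk, and it is the natural Lipschitz hypothesis when $\ell\circ\mathcal F$ is a shift of a fixed Lipschitz baseline. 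Under this reading the single duality application in Step~1 gives exactly $\alpha L_\ell\varepsilon$. I would state this interpretation explicitly, since without it only the factor-$2$ version follows.
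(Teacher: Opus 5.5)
The gap is the step you flagged yourself, and it cannot be closed. Assumption~1 of Theorem~\ref{Theorem1} says each map $x\mapsto\ell(f(x),y)$ is $L_\ell$-Lipschitz. From that, your excess-loss function $h$ is only guaranteed to be $2L_\ell$-Lipschitz. Requiring $\|\ell(f(\cdot),\cdot)-\ell(g(\cdot),\cdot)\|_{\mathrm{Lip}}\le L_\ell$ for all $f,g\in\mathcal F$ is therefore an additional hypothesis, not a reading of Assumption~1.

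In fact the factor-one bound is false under the stated assumptions. Take the following setup:
\begin{itemize}
\item one class, $\mathcal X=[0,\varepsilon]$, with $P_1$ the point mass at $0$ and $P_{G,1}$ the point mass at $\varepsilon$ (so $W_1=\varepsilon$), and $\alpha=1$;
\item $\mathcal F=\{f_1,f_2\}$ with composed losses $x\mapsto L_\ell x$ and $x\mapsto\tfrac32L_\ell\varepsilon-L_\ell x$ (e.g.\ real-valued scores with $\ell(t,y)=|t|$).
\end{itemize}
Both losses are $L_\ell$-Lipschitz with values in $[0,B]$, where $B=\tfrac32L_\ell\varepsilon$. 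Every training point equals $\varepsilon$, so the empirical risks are $L_\ell\varepsilon$ and $\tfrac12L_\ell\varepsilon$. ERM therefore returns $f_2$ with probability one, and $R_P(\widehat f_\alpha)-\inf_{\mathcal F}R_P=\tfrac32L_\ell\varepsilon$. The right-hand side exceeds $\inf_{\mathcal F}R_P$ by only $L_\ell\varepsilon$ plus terms of order $B\sqrt{\log(2/\delta)/n}$, since a two-function class has $\mathfrak R_n=O(B/\sqrt n)$. So the inequality fails for large $n$, and here $h$ is exactly $2L_\ell$-Lipschitz.

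The paper's proof takes the route you rejected. It applies Theorem~\ref{Theorem1} twice:
\begin{itemize}
\item to $\widehat f_\alpha$, giving $R_P(\widehat f_\alpha)\le R_{P_\alpha}(\widehat f_\alpha)+\alpha L_\ell\varepsilon$;
\item to the infimum, giving $\inf_{\mathcal F}R_{P_\alpha}\le\inf_{\mathcal F}R_P+\alpha L_\ell\varepsilon$.
\end{itemize}
It then inserts the second inequality into the standard excess-risk bound under $P_\alpha$. That substitution bounds $R_{P_\alpha}(\widehat f_\alpha)$, but the paper writes $R_P(\widehat f_\alpha)$ on the left without adding back the first $\alpha L_\ell\varepsilon$. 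Carried out correctly, the argument gives $2\alpha L_\ell\varepsilon$, the same as your Step~1 with $\|h\|_{\mathrm{Lip}}\le 2L_\ell$.

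So the defensible statements are the factor-two bound under Theorem~\ref{Theorem1}'s assumptions, or the factor-one bound with your difference-class condition stated as an explicit extra assumption. Of the two routes, yours is the better one. Sharing the comparator $f^\star$ between the estimation and bias parts, and applying Kantorovich--Rubinstein once to $h$, gives the bias term $\alpha\sup_{f,g\in\mathcal F}\|\ell(f(\cdot),\cdot)-\ell(g(\cdot),\cdot)\|_{\mathrm{Lip}}\sum_y\pi_y\varepsilon_y$. This never exceeds $2\alpha L_\ell\varepsilon$ and can be much smaller when the losses share a common Lipschitz component.

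One minor point in Step~2. With the expected complexity your constants are fine; even $2B\sqrt{\log(2/\delta)/(2n)}$ suffices in total. With the empirical complexity, however, the uniform-deviation term at confidence $\delta/2$ costs $3B\sqrt{\log(4/\delta)/(2n)}$, not $2B\sqrt{\log(2/\delta)/(2n)}$.
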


The proof follows from standard Rademacher complexity
generalization bounds \cite{BartlettMendelson2002,Mohri2018}
combined with Theorem~\ref{Theorem1}.

\begin{proof}

We decompose excess risk into estimation error and augmentation bias. For any $f \in \mathcal{F}$,
\begin{align*}
R_P(\widehat f_\alpha)
&=
R_{P_\alpha}(\widehat f_\alpha)
+
\big(R_P(\widehat f_\alpha) - R_{P_\alpha}(\widehat f_\alpha)\big).
\end{align*}
By Theorem \ref{Theorem1}, $|R_P(f) - R_{P_\alpha}(f)|
\le
\alpha L_\ell \varepsilon. $ Thus we get, $
R_P(\widehat f_\alpha)
\le
R_{P_\alpha}(\widehat f_\alpha)
+
\alpha L_\ell \varepsilon$. Applying the standard Rademacher generalization bound, with probability at least $1-\delta$,
\begin{align}\label{radbound}
R_{P_\alpha}(\widehat f_\alpha)
\le
\inf_{f \in \mathcal{F}} R_{P_\alpha}(f)
+
2 \mathfrak R_n(\ell \circ \mathcal{F})
+
3B \sqrt{\frac{\log(2/\delta)}{2n}}.
\end{align}
Again using Theorem \ref{Theorem1} and considering the infimum over $f \in \mathcal{F}$ gives,
\begin{align}
\inf_{f \in \mathcal{F}} R_{P_\alpha}(f)
\le
\inf_{f \in \mathcal{F}} R_P(f)
+
\alpha L_\ell \varepsilon.
\end{align}
Substituting this back into Equation \ref{radbound},
\begin{align}
R_P(\widehat f_\alpha)
\le
\inf_{f \in \mathcal{F}} R_P(f)
+
2 \mathfrak R_n(\ell \circ \mathcal{F})
+
3B \sqrt{\frac{\log(2/\delta)}{2n}}
+
\alpha L_\ell \varepsilon.
\end{align}

\end{proof}

The bound highlights a fundamental trade-off between estimation error and augmentation bias. While lower Wasserstein discrepancies yield tighter control of augmentation-induced risk distortion, predictive performance also depends on hypothesis class capacity and finite-sample effects. Consequently, improvements in generative fidelity do not necessarily translate into proportional gains in classification accuracy.

%-----------------------------------------------------------------------------------------------------------------
\section{Experimental Design}
%-----------------------------------------------------------------------------------------------------------------

This section describes the datasets, augmentation procedures, evaluation metrics, and experimental settings used to validate the proposed theoretical framework. The experiments are designed to assess both the distributional fidelity of generated samples and their impact on downstream classification performance under class imbalance.

%-----------------------------------------------------------------------------------------------------------------
\subsection{Experimental Objectives}
%-----------------------------------------------------------------------------------------------------------------

The empirical study evaluates the reliability of conditional generative augmentation by assessing distributional fidelity, classification performance under varying augmentation strengths, differences between CGAN and CWGAN-GP generated samples, and the relationship between generative fidelity and downstream predictive performance.

%-----------------------------------------------------------------------------------------------------------------
\subsection{Datasets}
%-----------------------------------------------------------------------------------------------------------------

To evaluate the proposed framework across different imbalance settings, experiments were conducted on two benchmark classification datasets exhibiting natural class imbalance.

\begin{table}[H]
\centering
\caption{Summary of datasets used in the empirical evaluation.}
\label{tab:dataset_summary}
\begin{tabular}{lcccc}
\hline
Dataset & Samples & Features & Classes & Minority Class (\%) \\
\hline
Credit Card Default & 30,000 & 23 & 2 & 22.1 \\
Forest Cover Type & 581,012 & 54 & 7 & 0.47 \\
\hline
\end{tabular}
\end{table}

The Credit Card Default dataset is a binary classification problem in which the objective is to predict whether a customer will default on the next month's payment. The dataset contains financial and demographic attributes and exhibits a naturally imbalanced class distribution, making it a widely used benchmark for studying imbalance-aware learning methods.

The Forest Cover Type dataset is a multiclass classification problem consisting of cartographic and environmental variables used to predict forest cover categories. The dataset contains seven classes with substantially different class frequencies, providing a challenging large-scale benchmark for evaluating class-conditional generative augmentation in multiclass settings.

Together, these datasets enable evaluation of the proposed framework across both binary and multiclass imbalance scenarios while providing substantially different data characteristics and class distributions.

%-----------------------------------------------------------------------------------------------------------------
\subsection{Experimental Setup}
%-----------------------------------------------------------------------------------------------------------------

\paragraph{Generative Augmentation Models.}
Two conditional generative models are considered in this study: the CGAN and the CWGAN-GP. Both models are trained to learn class conditional feature distributions from the training data and subsequently generate synthetic samples for data augmentation. The generated samples are then combined with the original training data according to the augmentation strength parameter $\alpha$ to construct the augmented datasets used for downstream classification.\\

\noindent We compare CWGAN-GP against the following baselines:
\begin{itemize}
    \item \textbf{Real}: Training on the original imbalanced dataset without augmentation.
    \item \textbf{ROS}: Random Oversampling, replicating minority-class samples until class balance is achieved. While computationally efficient and simple to implement, this method often leads to overfitting due to duplicate data points.
    \item \textbf{SMOTE}: Synthetic Minority Over-sampling Technique based on nearest-neighbor interpolation.
    \item \textbf{CGAN}: Conditional GAN-based augmentation without Wasserstein regularization.
\end{itemize}

\paragraph{Downstream Classifiers.}
Downstream classification performance is evaluated using two representative classifiers: a Random Forest (RF) with 100 trees and a Multi-Layer Perceptron (MLP) with two hidden layers and ReLU activations. These models were selected to represent two distinct learning paradigms, allowing the effects of synthetic augmentation to be assessed across different classifier families.

\paragraph{Augmentation Strength $\alpha$.}
To assess the effect of augmentation intensity, experiments were conducted with augmentation strengths $\alpha \in \{0.25, 0.50, 0.75, 1.00\}$.

\paragraph{Evaluation Metrics and Statistical Testing.}
Performance is evaluated using Macro-F1 score and Macro Recall, which are appropriate for assessing classification performance under class imbalance. Distributional fidelity is quantified using both the Average Class-Wise Wasserstein Distance and the Weighted Class-Wise Wasserstein Distance between real and generated class-conditional distributions.

\paragraph{Implementation Details.}
CGAN and CWGAN-GP were trained using dataset-specific hyperparameter settings selected through preliminary experimentation. For the Credit Card dataset, a latent dimension of 64 and 500 training epochs were used, whereas the Forest Cover dataset employed a latent dimension of 32 and 300 training epochs. In all experiments, models were optimized using the Adam optimizer with a learning rate of $10^{-4}$, and CWGAN-GP was trained using a gradient penalty coefficient of 10. All reported results were averaged over three independent random seeds. Complete hyperparameter settings are provided in Appendix \ref{hype}.

%-----------------------------------------------------------------------------------------------------------------
\section{Results and Discussion}
%-----------------------------------------------------------------------------------------------------------------

%-----------------------------------------------------------------------------------------------------------------
\subsection{Distributional Fidelity Analysis}
%-----------------------------------------------------------------------------------------------------------------
We first evaluate the fidelity of the generated samples using Wasserstein-based measures. Since the theoretical framework developed in Section~3 identifies Wasserstein approximation error as a key determinant of augmentation reliability, these results provide a direct empirical assessment of the proposed risk bounds.

For the Credit Card dataset, fidelity was measured using the Wasserstein distance between the real and generated minority-class distributions. For the Forest Cover dataset, fidelity was assessed using both the Average Class-Wise Wasserstein Distance and the Weighted Class-Wise Wasserstein Distance.

\begin{table}[H]
\centering
\caption{
Distributional fidelity measured using Wasserstein distances (mean $\pm$ standard deviation over three random seeds).
}
\label{wasserstein_results}
\begin{tabular}{llcc}
\toprule
Dataset & Method & Wasserstein Distance & Average Class-Wise $W1$ \\
\midrule
Credit Card & CGAN & $27408.45 \pm 567.19$ & -- \\
Credit Card & CWGAN-GP & $3714.99 \pm 559.21$ & -- \\
\midrule
Forest Cover & CGAN & $0.1231 \pm 0.0158$ & $0.0601 \pm 0.0072$ \\
Forest Cover & CWGAN-GP & $0.1072 \pm 0.0164$ & $0.0543 \pm 0.0027$ \\
\bottomrule
\end{tabular}
\end{table}

Table~\ref{wasserstein_results} shows that CWGAN-GP consistently achieves lower Wasserstein discrepancies than CGAN across both datasets. On the Credit Card dataset, the Wasserstein distance decreases from approximately $27,408$ to $3,715$, while on the Forest Cover dataset CWGAN-GP achieves lower average and weighted class-wise Wasserstein distances. These results indicate that CWGAN-GP produces synthetic samples that more closely approximate the underlying class-conditional distributions.

Because Wasserstein discrepancy appears explicitly in the theoretical risk bounds of Section~3, lower values correspond to tighter control of augmentation-induced risk distortion. Thus, the observed reductions in Wasserstein distance provide empirical support for the proposed framework and suggest that CWGAN-GP generates higher-fidelity synthetic data than CGAN.

For the Credit Card dataset, Wasserstein discrepancy increased with augmentation strength for both generative methods. However, CWGAN-GP consistently maintained substantially lower Wasserstein values than CGAN, ranging from approximately 6,340 to 11,744 compared with 89,924 to 166,518 for CGAN.

The next subsection investigates whether these improvements in distributional fidelity translate into improved downstream classification performance.

%-----------------------------------------------------------------------------------------------------------------
\subsection{Classification Performance Under Augmentation}
%-----------------------------------------------------------------------------------------------------------------

We next evaluate the impact of generative augmentation on downstream classification performance. Following the experimental protocol described in Section~4, performance is assessed using Macro-F1 score and Macro Recall, with results averaged over three independent random seeds. These metrics provide complementary perspectives on classification quality under class imbalance, with Macro-F1 measuring overall predictive performance and Macro Recall emphasizing performance on underrepresented classes.

\begin{table}[H]
\centering
\caption{
Classification performance on the Credit Card dataset (mean $\pm$ standard deviation over three random seeds).
}
\label{cc_performance}
\begin{tabular}{lccc}
\toprule
Method & $\alpha$ & Macro-F1 & Macro Recall \\
\midrule
Real & 0.00 &
$0.6720 \pm 0.0108$ &
$0.6495 \pm 0.0104$ \\
\midrule
ROS & 0.50 &
$\mathbf{0.6809 \pm 0.0088}$ &
$\mathbf{0.6714 \pm 0.0148}$ \\
\midrule
SMOTE & 0.50 &
$0.6778 \pm 0.0138$ &
$0.6699 \pm 0.0061$ \\
\midrule
CGAN & 0.25 &
$0.6728 \pm 0.0092$ &
$0.6494 \pm 0.0094$ \\
 & 0.50 &
$0.6751 \pm 0.0049$ &
$0.6520 \pm 0.0057$ \\
 & 0.75 &
$0.6689 \pm 0.0208$ &
$0.6471 \pm 0.0190$ \\
 & 1.00 &
$0.6764 \pm 0.0087$ &
$0.6537 \pm 0.0085$ \\
\midrule
CWGAN-GP & 0.25 &
$0.6721 \pm 0.0126$ &
$0.6493 \pm 0.0124$ \\
& 0.50 &
$0.6742 \pm 0.0094$ &
$0.6523 \pm 0.0080$ \\
& 0.75 &
$0.6742 \pm 0.0128$ &
$0.6525 \pm 0.0121$ \\
 & 1.00 &
$0.6714 \pm 0.0152$ &
$0.6497 \pm 0.0131$ \\
\bottomrule
\end{tabular}
\end{table}

\begin{table}[H]
\centering
\caption{
Classification performance on the Forest Cover dataset (mean $\pm$ standard deviation over three random seeds).
}
\label{forest_performance}
\begin{tabular}{lccc}
\toprule
Method & $\alpha$ & Macro-F1 & Macro Recall \\
\midrule
Real & 0.00 &
$0.7727 \pm 0.0209$ &
$0.7407 \pm 0.0049$ \\
\midrule
ROS& 0.25 &
$0.7887 \pm 0.0327$ &
$0.8055 \pm 0.0193$ \\
\midrule
SMOTE & 0.50 &
$\mathbf{0.7901 \pm 0.0397}$ &
$\mathbf{0.8183 \pm 0.0235}$ \\
\midrule
CGAN & 0.25 &
$0.7545 \pm 0.0439$ &
$0.7185 \pm 0.0343$ \\
 & 0.50 &
$0.7514 \pm 0.0448$ &
$0.7121 \pm 0.0375$ \\
& 0.75 &
$0.7452 \pm 0.0467$ &
$0.7128 \pm 0.0307$ \\
 & 1.00 &
$0.7490 \pm 0.0476$ &
$0.7151 \pm 0.0328$ \\
\midrule
CWGAN-GP & 0.25 &
$0.7413 \pm 0.0553$ &
$0.7006 \pm 0.0476$ \\
 & 0.50 &
$0.7295 \pm 0.0677$ &
$0.6960 \pm 0.0531$ \\
 & 0.75 &
$0.7279 \pm 0.0679$ &
$0.6886 \pm 0.0628$ \\
 & 1.00 &
$0.7236 \pm 0.0764$ &
$0.6883 \pm 0.0639$ \\
\bottomrule
\end{tabular}
\end{table}

Tables~\ref{cc_performance} and~\ref{forest_performance} summarize the classification results for the Credit Card and Forest Cover datasets, respectively. For ROS and SMOTE, we report the best-performing augmentation level across the same candidate levels used for the generative methods. For CGAN and CWGAN-GP, results are reported separately for each value of $\alpha$ to evaluate sensitivity to augmentation strength. To improve readability, only the Random Forest (RF) and Multi-Layer Perceptron (MLP) classifiers are reported in the main text, while additional classifier results are provided in the Appendix \ref{classifier-tables}.

For the Credit Card dataset, all augmentation methods achieve performance close to the real-data baseline. Random Oversampling obtains the highest Macro-F1 and Macro Recall, followed closely by SMOTE. Among the generative methods, CGAN and CWGAN-GP produce comparable classification performance across augmentation strengths, despite the much lower Wasserstein discrepancy achieved by CWGAN-GP. This suggests that improved distributional fidelity does not necessarily translate into large predictive gains in the binary setting.

For the Forest Cover dataset, SMOTE achieves the strongest overall performance, with a Macro-F1 of $0.7901$ and Macro Recall of $0.8183$. Random Oversampling is also highly competitive, while the real-data baseline remains stronger than both generative methods. CGAN outperforms CWGAN-GP in predictive performance across all augmentation levels, even though CWGAN-GP has lower Wasserstein discrepancy. This contrast provides important evidence that distributional fidelity and downstream predictive utility are related but not equivalent.

To assess whether observed performance differences are statistically significant, paired Wilcoxon signed-rank tests were conducted across the three random seeds. No statistically significant differences were observed between CGAN and CWGAN-GP in Macro-F1 performance on either dataset ($p > 0.05$), despite the substantially lower Wasserstein discrepancies achieved by CWGAN-GP.

Taken together, the results highlight an important distinction between distributional fidelity and predictive utility. Although the previous subsection demonstrated that CWGAN-GP consistently achieves lower Wasserstein discrepancies than CGAN, these improvements do not universally translate into superior classification performance. In particular, the Forest Cover experiments show that classical oversampling methods remain highly competitive from a predictive perspective. This observation motivates the more detailed minority-class analysis presented next and provides an important empirical context for interpreting the proposed Wasserstein-based risk bounds.\\

\begin{figure}[H]
\begin{subfigure}{0.48\textwidth}
\centering
\includegraphics[width=\textwidth]{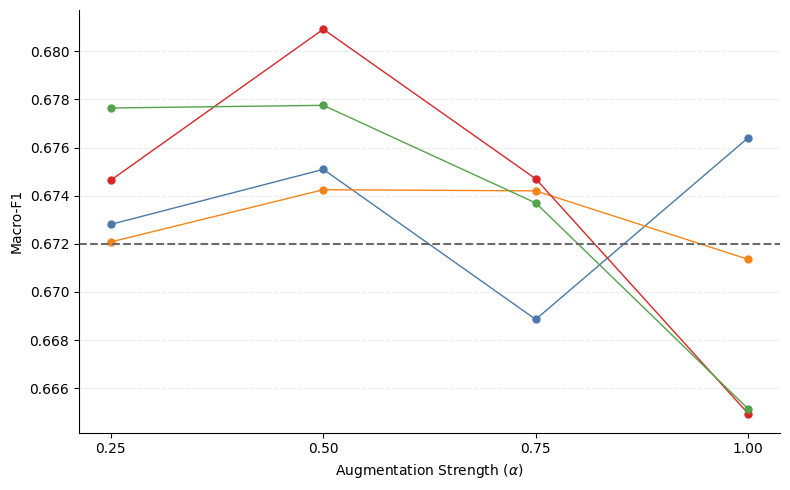}
\caption{Credit Card}
\end{subfigure}
\hfill
\begin{subfigure}{0.48\textwidth}
\centering
\includegraphics[width=\textwidth]{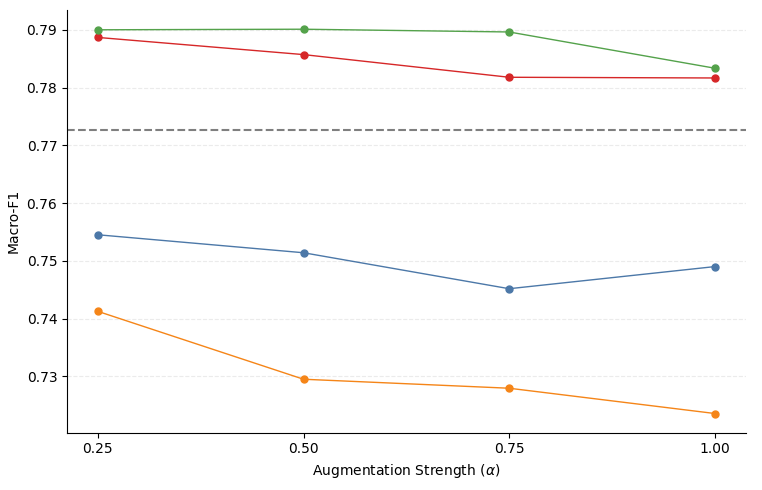}
\caption{Forest Cover}
\end{subfigure}
\begin{center}
\footnotesize
\textcolor[HTML]{6B6B6B}{\rule[0.5ex]{1.5em}{1.5pt}}
\ Real
\qquad
\textcolor[HTML]{4C78A8}{\rule[0.5ex]{1.5em}{1.5pt}}
\ CGAN
\qquad
\textcolor[HTML]{F58518}{\rule[0.5ex]{1.5em}{1.5pt}}
\ CWGAN-GP
\qquad
\textcolor[HTML]{54A24B}{\rule[0.5ex]{1.5em}{1.5pt}}
\ SMOTE
\qquad
\textcolor[HTML]{D62728}{\rule[0.5ex]{1.5em}{1.5pt}}
\ ROS
\end{center}
\centering
\caption{
Macro-F1 as a function of augmentation strength $\alpha$ for the Credit Card and Forest Cover datasets.
}
\label{alpha_f1}
\end{figure}

Figure~\ref{alpha_f1} provides a visual summary of the effect of augmentation strength on classification performance. For both CGAN and CWGAN-GP, increasing the proportion of synthetic samples generally leads to a gradual reduction in Macro-F1. In contrast, the performance of SMOTE and Random Oversampling remains relatively stable across augmentation strengths. These results suggest that excessive reliance on synthetic data may adversely affect predictive performance despite improvements in distributional fidelity.

%-----------------------------------------------------------------------------------------------------------------
\subsection{Minority-Class Analysis}
%-----------------------------------------------------------------------------------------------------------------

While aggregate metrics such as Macro-F1 and Macro Recall provide an overall assessment of classification performance, they may obscure behavior on underrepresented classes. To better understand the effect of augmentation on rare categories, we examine class-specific recall for the four least frequent classes in the Forest Cover dataset: Aspen, Cottonwood/Willow, Douglas Fir, and Krummholz. 

Since the Credit Card dataset involves only two classes, Macro Recall already provides a direct measure of minority-class detection performance, and a separate class-wise analysis is therefore omitted.

\begin{table}[H]
\centering
\caption{
Minority-class recall for the Forest Cover dataset (mean $\pm$ standard deviation over three random seeds). Results are reported using the best-performing augmentation strength for each method: $\alpha=0.25$ for CGAN and CWGAN-GP, $\alpha=0.50$ for SMOTE, and $\alpha=0.00$ for the real-data baseline.
}
\label{tab:minority_recall}
\begin{tabular}{lcccc}
\toprule
Class & Real & CGAN & CWGAN-GP & SMOTE \\
\midrule
Aspen &
$0.371 \pm 0.020$ &
$0.371 \pm 0.039$ &
$0.369 \pm 0.025$ &
$\mathbf{0.641 \pm 0.022}$ \\

Cottonwood/Willow &
$0.718 \pm 0.051$ &
$0.742 \pm 0.049$ &
$0.723 \pm 0.057$ &
$\mathbf{0.822 \pm 0.035}$ \\

Douglas Fir &
$0.625 \pm 0.026$ &
$0.628 \pm 0.029$ &
$0.630 \pm 0.033$ &
$\mathbf{0.759 \pm 0.010}$ \\

Krummholz &
$0.803 \pm 0.012$ &
$0.811 \pm 0.016$ &
$0.811 \pm 0.021$ &
$\mathbf{0.878 \pm 0.016}$ \\
\bottomrule
\end{tabular}
\end{table}

The results indicate that CGAN and CWGAN-GP provide only modest improvements over the real-data baseline. Small gains are observed for Cottonwood/Willow, Douglas Fir, and Krummholz, while recall for Aspen remains largely unchanged. In contrast, SMOTE consistently achieves the highest recall across all four minority classes, with particularly large improvements for Aspen and Douglas Fir. These improvements reflect SMOTE's ability to directly increase minority-class representation through interpolation-based oversampling.

\begin{figure}[H]
\centering
\includegraphics[width=0.75\textwidth]{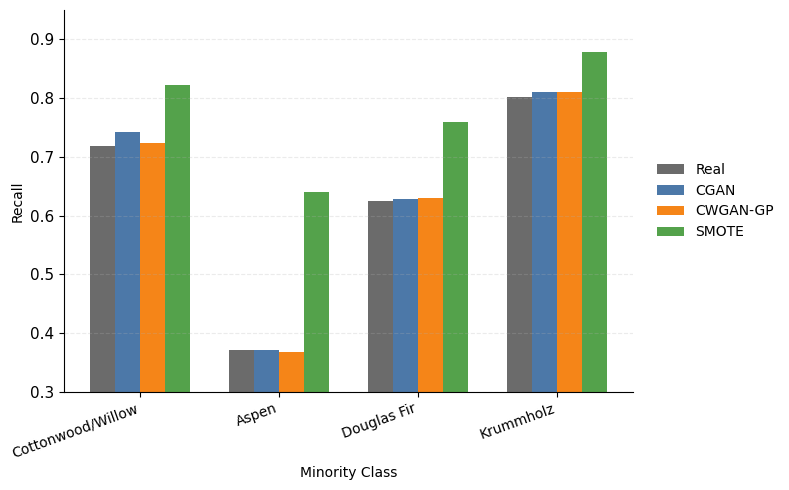}
\caption{
Recall of the four least frequent classes in the Forest Cover dataset using the Random Forest classifier. Results are reported for the best-performing augmentation strength for each method ($\alpha=0.25$ for CGAN and CWGAN-GP, $\alpha=0.50$ for SMOTE, and $\alpha=0$ for the real-data baseline).}
\label{minority_recall}
\end{figure}

Interestingly, the recall values obtained by CGAN and CWGAN-GP are often very similar despite the substantial differences in their Wasserstein fidelity reported in Table~\ref{wasserstein_results}. For example, CWGAN-GP achieves considerably lower Wasserstein discrepancies than CGAN, yet both methods produce nearly identical recall for Douglas Fir and Krummholz. This observation suggests that improvements in distributional fidelity do not necessarily translate into proportional gains in minority-class predictive performance.

Figure~\ref{minority_recall} provides a visual summary of these results. Although CWGAN-GP achieves substantially lower Wasserstein discrepancies than CGAN, the resulting minority-class recall values remain comparable across most classes. This finding reinforces a central conclusion of the study: distributional fidelity and predictive utility are related but distinct aspects of augmentation quality. While Wasserstein-based generative models produce higher-fidelity synthetic samples, improvements in distributional approximation do not necessarily translate into proportional gains in downstream classification performance.

\subsection{Empirical Validation of the Theory}

The empirical results support the theoretical role of Wasserstein discrepancy in the proposed risk bounds. Table~\ref{wasserstein_results} shows that CWGAN-GP yields smaller Wasserstein approximation errors than CGAN on both datasets, implying a smaller augmentation-bias term in Theorem~\ref{Theorem1}. 
The dependence of Wasserstein discrepancy on augmentation strength further supports Theorem~\ref{Theorem1}. For the Credit Card dataset, the Wasserstein distance increased from 89,924 to 166,518 under CGAN as $\alpha$ increased from 0.25 to 1.00. CWGAN-GP exhibited the same trend but with substantially smaller discrepancies, ranging from 6,340 to 11,744. Since the theoretical risk distortion term scales as $\alpha \varepsilon$, these results suggest that stronger augmentation magnifies the impact of distributional approximation error.
However, Tables~\ref{cc_performance} and~\ref{forest_performance} show that lower Wasserstein discrepancy does not necessarily produce the highest Macro-F1 or Macro Recall. This behavior is consistent with Theorem~\ref{Theorem3}, where predictive performance depends not only on augmentation bias but also on hypothesis class capacity and finite-sample estimation effects.

The empirical findings therefore provide partial validation of the proposed theory. The Wasserstein approximation error successfully explains differences in distributional fidelity between CGAN and CWGAN-GP and behaves consistently with the theoretical risk bounds. However, predictive performance is also influenced by classifier capacity, finite-sample effects, and the interaction between synthetic and real observations. These findings support the interpretation of Wasserstein discrepancy as a measure of augmentation reliability rather than a direct predictor of classification accuracy.

\section{Discussion}

This study investigated conditional generative augmentation from both theoretical and empirical perspectives. The proposed framework establishes a direct connection between generative fidelity and classification risk through Wasserstein-based approximation error, providing a principled basis for evaluating the reliability of synthetic data augmentation. Unlike many existing studies that assess augmentation methods primarily through predictive performance, the present work emphasizes distributional fidelity as a fundamental property of synthetic data quality.

The empirical results demonstrate that CWGAN-GP consistently achieves lower Wasserstein discrepancies than CGAN across both the Credit Card and Forest Cover datasets. These findings are consistent with the theoretical analysis, which identifies the class-conditional Wasserstein distance as a key quantity governing augmentation-induced risk distortion. From a distributional perspective, the results indicate that Wasserstein-based adversarial training produces synthetic samples that more closely approximate the underlying data-generating process.

At the same time, the experiments reveal that improvements in distributional fidelity do not necessarily translate into superior classification performance. In several settings, classical oversampling methods such as SMOTE achieved higher Macro-F1 scores and minority-class recall than either generative augmentation approach. This observation highlights an important distinction between synthetic data fidelity and predictive utility. While fidelity measures how accurately a generator reproduces the underlying distribution, classification performance depends additionally on factors such as class separability, decision-boundary complexity, sample size, and the inductive bias of the downstream learning algorithm.

An interesting observation is that SMOTE frequently achieved higher classification performance than both CGAN and CWGAN-GP, particularly on the Forest Cover dataset. Because the Forest Cover dataset contains a relatively large number of minority-class observations, interpolation-based oversampling may be sufficient to improve decision-boundary estimation without requiring accurate modeling of the full class-conditional distribution. By contrast, generative augmentation seeks to approximate the entire class-conditional distribution, a substantially more difficult task that may not yield proportional gains in predictive performance when the primary challenge is class imbalance rather than distributional sparsity. These results suggest that higher-fidelity synthetic data does not necessarily imply superior classification accuracy and that the effectiveness of augmentation depends on both the data characteristics and the downstream learning objective.

The results further suggest that augmentation strength should be viewed as a data-dependent parameter rather than a universally optimal quantity. For the Forest Cover dataset, increasing the proportion of synthetic data generally led to reduced predictive performance, whereas the Credit Card dataset exhibited modest improvements under stronger CWGAN-GP augmentation. These contrasting behaviors indicate that the effectiveness of augmentation depends on the interaction between dataset characteristics, class imbalance structure, and generative fidelity.

Taken together, the findings support the central premise of the proposed framework: generative augmentation should be evaluated not only through predictive metrics but also through distributional reliability. Wasserstein-based fidelity provides a principled measure of how closely synthetic data approximates the underlying class-conditional distributions and therefore offers a theoretically grounded perspective on augmentation quality. More broadly, the study suggests that distributional fidelity and predictive performance should be regarded as complementary evaluation criteria rather than interchangeable measures of success.

\paragraph{Limitations} 
Several limitations should be acknowledged. First, the empirical evaluation is limited to two structured tabular datasets; additional studies across a broader range of domains would strengthen the generality of the conclusions. Second, the theoretical analysis relies on standard Lipschitz continuity assumptions and Wasserstein-based distributional comparisons, which may not fully capture the behavior of highly complex models in high-dimensional settings. Third, only CGAN and CWGAN-GP architectures were considered, while newer generative models such as diffusion and transformer-based approaches may exhibit different fidelity–performance relationships. Finally, the proposed framework provides guarantees on distributional fidelity and augmentation-induced risk distortion rather than direct guarantees of classification improvement.

\section{Conclusion}

This work developed a statistical framework for analyzing conditional generative augmentation in imbalanced classification. By modeling augmentation as a distribution-mixing process, we derived Wasserstein-based risk bounds that explicitly connect generative fidelity, augmentation strength, and classification risk. The resulting theory provides a principled foundation for understanding how distributional approximation error influences the reliability of synthetic data augmentation.

Empirical evaluations on the Credit Card Default and Forest Cover datasets support the theoretical framework. Across both datasets, CWGAN-GP consistently achieved lower Wasserstein discrepancies than CGAN, indicating closer approximation of the underlying class-conditional distributions. At the same time, the results showed that improved distributional fidelity does not necessarily yield superior classification performance, as classical oversampling methods such as SMOTE often remained competitive or outperformed generative approaches. These findings highlight that distributional reliability and predictive utility are related but distinct aspects of synthetic data quality.

Overall, the proposed framework extends the evaluation of synthetic augmentation beyond predictive accuracy alone by introducing distributional reliability as a theoretically grounded criterion. Rather than viewing augmentation solely as a tool for improving classification performance, this perspective provides a principled approach for assessing the quality and trustworthiness of synthetic data in machine learning applications.

\section{Future Work}
A particularly promising direction is the development of adaptive augmentation strategies in which augmentation strength is selected on a class-by-class basis. The empirical results suggest that a fixed augmentation level may not be optimal across all classes or datasets. Future work will investigate augmentation schemes that incorporate class imbalance, generative fidelity, and structural characteristics of the data to determine class-specific augmentation strengths with accompanying theoretical guarantees.

\section*{Code Availability}

The code used to generate all experimental results, figures, and tables will be made publicly available upon acceptance.

\bibliography{References.bib}

\begin{thebibliography}{10}

\bibitem{HeGarcia2009}
H.~He and E.~A. Garcia, ``Learning from imbalanced data,'' {\em IEEE
  Transactions on Knowledge and Data Engineering}, vol.~21, no.~9,
  pp.~1263--1284, 2009.

\bibitem{Lucic2018}
M.~Lucic, K.~Kurach, M.~Michalski, S.~Gelly, and O.~Bousquet, ``Are gans
  created equal? a large-scale study,'' in {\em Advances in Neural Information
  Processing Systems}, vol.~31, pp.~700--709, 2018.

\bibitem{Goodfellow2014}
I.~Goodfellow, J.~Pouget-Abadie, M.~Mirza, B.~Xu, D.~Warde-Farley, S.~Ozair,
  A.~Courville, and Y.~Bengio, ``Generative adversarial nets,'' in {\em
  Advances in Neural Information Processing Systems}, vol.~27, pp.~2672--2680,
  Curran Associates, Inc., 2014.

\bibitem{Arjovsky2017}
M.~Arjovsky, S.~Chintala, and L.~Bottou, ``Wasserstein generative adversarial
  networks,'' in {\em Proceedings of the 34th International Conference on
  Machine Learning} (D.~Precup and Y.~W. Teh, eds.), vol.~70 of {\em
  Proceedings of Machine Learning Research}, pp.~214--223, PMLR, 2017.

\bibitem{Gulrajani2017}
I.~Gulrajani, F.~Ahmed, M.~Arjovsky, V.~Dumoulin, and A.~Courville, ``Improved
  training of wasserstein gans,'' in {\em Advances in Neural Information
  Processing Systems}, vol.~30, pp.~5767--5777, 2017.

\bibitem{DouzasBacao2018}
G.~Douzas and F.~Bacao, ``Effective data generation for imbalanced learning
  using generative adversarial networks,'' {\em Information Sciences},
  vol.~465, pp.~1--20, 2018.

\bibitem{Chawla2002}
N.~V. Chawla, K.~W. Bowyer, L.~O. Hall, and W.~P. Kegelmeyer, ``Smote:
  Synthetic minority over-sampling technique,'' {\em Journal of Artificial
  Intelligence Research}, vol.~16, pp.~321--357, 2002.

\bibitem{Mirza2014}
M.~Mirza and S.~Osindero, ``Conditional generative adversarial nets,'' in {\em
  Proceedings of the Workshop on Challenges in Representation Learning, ICML},
  2014.

\bibitem{FridAdar2018}
M.~Frid-Adar, I.~Diamant, E.~Klang, M.~Amitai, J.~Goldberger, and H.~Greenspan,
  ``Gan-based synthetic medical image augmentation for increased cnn
  performance,'' {\em Neurocomputing}, vol.~321, pp.~321--331, 2018.

\bibitem{BenDavid2010}
S.~Ben-David, J.~Blitzer, K.~Crammer, A.~Kulesza, F.~Pereira, and J.~Vaughan,
  ``A theory of learning from different domains,'' {\em Machine Learning},
  vol.~79, no.~1--2, pp.~151--175, 2010.

\bibitem{Redko2017}
I.~Redko, A.~Habrard, M.~Sebban, and R.~Flamary, ``Optimal transport for
  multi-source domain adaptation,'' {\em IEEE Transactions on Pattern Analysis
  and Machine Intelligence}, vol.~41, no.~8, pp.~1853--1865, 2017.

\bibitem{BartlettMendelson2002}
P.~L. Bartlett and S.~Mendelson, ``Rademacher and gaussian complexities: Risk
  bounds and structural results,'' {\em Journal of Machine Learning Research},
  vol.~3, pp.~463--482, 2002.

\bibitem{Mohri2018}
M.~Mohri, A.~Rostamizadeh, and A.~Talwalkar, {\em Foundations of Machine
  Learning}.
\newblock MIT Press, 2~ed., 2018.

\end{thebibliography}

\end{document}

% --- supplement: Appendix.tex ---

\maketitle
\setcounter{section}{0}
\renewcommand{\theequation}{A\arabic{equation}}
\renewcommand{\thesection}{A}
\vspace{-.75in}

 %================================================
\appendix
%================================================

 %-----------------------------------------------------------------------------------------------------------------
\section{ Classifier-Specific Performance Results}\label{classifier-tables}
%-----------------------------------------------------------------------------------------------------------------

This appendix reports classifier-specific results for the Random Forest (RF) and Multi-Layer Perceptron (MLP) models. While the main text presents aggregated performance summaries, the tables below provide a more detailed view of how individual classifiers respond to different augmentation strategies and augmentation strengths.

%-----------------------------------------------------------------------------------------------------------------
\subsection*{A.1 Credit Card Dataset}
%-----------------------------------------------------------------------------------------------------------------

\begin{table}[H]
\centering
\caption{Credit Card dataset: classifier-specific Macro-F1 scores (mean $\pm$ standard deviation over three random seeds).}
\label{tab:credit_classifier_results}
\begin{tabular}{llcc}
\toprule
Method & $\alpha$ & RF & MLP \\
\midrule

Real & 0.00 &
$0.6798 \pm 0.0052$ &
$0.6642 \pm 0.0090$ \\

\midrule
CGAN & 0.25 &
$0.6801 \pm 0.0042$ &
$0.6655 \pm 0.0056$ \\
& 0.50 &
$0.6774 \pm 0.0049$ &
$0.6728 \pm 0.0046$ \\
& 0.75 &
$0.6793 \pm 0.0038$ &
$0.6584 \pm 0.0272$ \\
& 1.00 &
$0.6790 \pm 0.0061$ &
$0.6738 \pm 0.0114$ \\

\midrule
CWGAN-GP & 0.25 &
$0.6803 \pm 0.0060$ &
$0.6639 \pm 0.0127$ \\
& 0.50 &
$0.6798 \pm 0.0074$ &
$0.6687 \pm 0.0086$ \\
& 0.75 &
$0.6803 \pm 0.0073$ &
$0.6681 \pm 0.0157$ \\
& 1.00 &
$0.6832 \pm 0.0062$ &
$0.6595 \pm 0.0110$ \\

\midrule
ROS & 0.25 &
$0.6743 \pm 0.0058$ &
$0.6750 \pm 0.0092$ \\
& 0.50 &
$0.6819 \pm 0.0053$ &
$0.6800 \pm 0.0128$ \\
& 0.75 &
$0.6864 \pm 0.0061$ &
$0.6630 \pm 0.0053$ \\
& 1.00 &
$0.6877 \pm 0.0050$ &
$0.6422 \pm 0.0178$ \\

\midrule
SMOTE & 0.25 &
$0.6856 \pm 0.0053$ &
$0.6697 \pm 0.0114$ \\
& 0.50 &
$0.6899 \pm 0.0059$ &
$0.6656 \pm 0.0003$ \\
& 0.75 &
$0.6882 \pm 0.0021$ &
$0.6592 \pm 0.0039$ \\
& 1.00 &
$0.6874 \pm 0.0079$ &
$0.6429 \pm 0.0111$ \\

\bottomrule
\end{tabular}
\end{table}
Table~\ref{tab:credit_classifier_results} presents classifier-specific Macro-F1 scores for the Credit Card dataset. Similar to the Forest Cover results, Random Forest consistently achieves higher performance than MLP across most augmentation settings. The improvements obtained through Random Oversampling and SMOTE are particularly pronounced for RF, suggesting that the classifier benefits from increased minority-class representation. In contrast, the performance differences between CGAN and CWGAN-GP remain relatively small despite the substantial reductions in Wasserstein discrepancy achieved by CWGAN-GP. This pattern is observed for both RF and MLP, indicating that improvements in generative fidelity do not necessarily produce proportional gains in predictive performance. Importantly, the overall ordering of augmentation methods remains stable across classifiers, providing additional evidence that the conclusions drawn from the aggregated results are not driven by a particular modeling choice.

%-----------------------------------------------------------------------------------------------------------------
\subsection*{A.2 Forest Cover Type Dataset}
%-----------------------------------------------------------------------------------------------------------------

\begin{table}[H]
\centering
\caption{Forest Cover dataset: classifier-specific Macro-F1 scores (mean $\pm$ standard deviation over three random seeds).}
\label{tab:forest_classifier_results}
\begin{tabular}{llcc}
\toprule
Method & $\alpha$ & RF & MLP \\
\midrule

Real & 0.00 &
$0.7912 \pm 0.0031$ &
$0.7542 \pm 0.0077$ \\

\midrule
CGAN & 0.25 &
$0.7929 \pm 0.0049$ &
$0.7161 \pm 0.0193$ \\
& 0.50 &
$0.7918 \pm 0.0055$ &
$0.7110 \pm 0.0088$ \\
& 0.75 &
$0.7871 \pm 0.0086$ &
$0.7033 \pm 0.0101$ \\
& 1.00 &
$0.7921 \pm 0.0041$ &
$0.7059 \pm 0.0094$ \\

\midrule
CWGAN-GP & 0.25 &
$0.7915 \pm 0.0041$ &
$0.6910 \pm 0.0066$ \\
& 0.50 &
$0.7907 \pm 0.0031$ &
$0.6683 \pm 0.0136$ \\
& 0.75 &
$0.7877 \pm 0.0087$ &
$0.6682 \pm 0.0276$ \\
& 1.00 &
$0.7921 \pm 0.0020$ &
$0.6550 \pm 0.0222$ \\

\midrule
ROS & 0.25 &
$0.8172 \pm 0.0091$ &
$0.7601 \pm 0.0121$ \\
& 0.50 &
$0.8164 \pm 0.0113$ &
$0.7551 \pm 0.0028$ \\
& 0.75 &
$0.8159 \pm 0.0084$ &
$0.7477 \pm 0.0098$ \\
& 1.00 &
$0.8166 \pm 0.0064$ &
$0.7467 \pm 0.0153$ \\

\midrule
SMOTE & 0.25 &
$0.8214 \pm 0.0051$ &
$0.7586 \pm 0.0163$ \\
& 0.50 &
$0.8251 \pm 0.0118$ &
$0.7551 \pm 0.0113$ \\
& 0.75 &
$0.8194 \pm 0.0086$ &
$0.7599 \pm 0.0131$ \\
& 1.00 &
$0.8177 \pm 0.0101$ &
$0.7490 \pm 0.0081$ \\

\bottomrule
\end{tabular}
\end{table}
Table~\ref{tab:forest_classifier_results} reports classifier-specific Macro-F1 scores for the Forest Cover dataset. Several observations emerge. First, Random Forest consistently outperforms MLP across all augmentation strategies, indicating that tree-based ensemble methods are particularly effective for modeling the complex nonlinear structure of the Forest Cover data. Second, the relative ranking of augmentation methods remains largely unchanged across classifiers. In both RF and MLP, Random Oversampling and SMOTE achieve the strongest performance, while CGAN and CWGAN-GP generally remain below the real-data baseline. Third, although CWGAN-GP achieves substantially lower Wasserstein discrepancies than CGAN, these improvements do not translate into superior classifier performance for either RF or MLP. This finding further supports the central conclusion of the paper that distributional fidelity and predictive utility capture distinct aspects of augmentation quality. Overall, the classifier-specific results demonstrate that the main conclusions reported in the aggregated analysis are robust across different learning algorithms.

%-------------------------------------------------------------
\section{Experimental Hyperparameters}\label{hype}
%-------------------------------------------------------------

Tables~\ref{tab:hyperparameters} summarize the hyperparameters used for all experiments. Unless otherwise stated, identical classifier settings were used across augmentation methods to ensure fair comparisons.

\begin{table}[H]
\centering
\caption{Hyperparameters used for the Credit Card and Forest Cover experiments.}
\label{tab:hyperparameters}
\begin{tabular}{lcc}
\toprule
Parameter & Credit Card & Forest Cover \\
\midrule
Number of Runs & 3 & 3 \\
Augmentation Strengths ($\alpha$) &
${0.25,0.50,0.75,1.00}$ &
${0.25,0.50,0.75,1.00}$ \\

Train/Test Split &
Stratified 80\% / 20\% &
Stratified 80\% / 20\% \\

Latent Dimension & 64 & 32 \\
Batch Size & 256 & 64 \\
Training Epochs & 300 & 300 \\
Optimizer & Adam & Adam \\
Learning Rate & $10^{-4}$ & $10^{-4}$ \\

\midrule
\multicolumn{3}{c}{\textbf{CWGAN-GP}} \\
\midrule

Critic Updates per Generator Step & 3 & 5 \\
Gradient Penalty Coefficient ($\lambda_{GP}$) & 10 & 10 \\

\midrule
\multicolumn{3}{c}{\textbf{Random Forest}} \\
\midrule

Number of Trees & 100 & 100 \\
Criterion & Gini & Gini \\

\midrule
\multicolumn{3}{c}{\textbf{MLP}} \\
\midrule

Hidden Layers & (128, 64) & (128, 64) \\
Activation Function & ReLU & ReLU \\
Maximum Iterations & 2000 & 2000 \\

\bottomrule
\end{tabular}
\end{table}

\section*{Code Availability}

The code used to generate all experimental results and figures reported in this study is publicly available at:

\url{https://github.com/yourusername/generative-augmentation-reliability}

The repository contains implementations of CGAN, CWGAN-GP, evaluation metrics, and scripts for reproducing all experiments.